\definecolor{darkgreen}{rgb}{0.0, 0.5, 0.0}
\newcommand{\ah}{\hat{a}}
\newcommand{\ab}{\mathbf{a}}
\newcommand{\coldef}{\vcentcolon=}
\newcommand{\pmodel}{\hat{p}_{\scriptscriptstyle \mathcal{M}}}
\newcommand{\phid}{p}
\newcommand{\ph}{\hat{p}_{t_2}}
\newcommand{\x}{\mathbf{x}}
\newcommand{\w}{\mathbf{w}}
\newcommand{\W}{\mathbf{W}}
\newcommand{\btheta}{{\uptheta}}
\newcommand{\RR}{\mathbb{R}}
\newcommand{\EE}{\mathbb{E}}
\newcommand{\Z}{\mathcal{Z}_t}
\newcommand{\Reg}{\mathcal{R}}
\newcommand{\loss}{\mathcal{L}(\w)}
\newcommand{\Loss}{\mathcal{L}(\W)}
\newcommand{\inner}[2]{#2^{\top} #1}
\newcommand{\Eq}{\mathbb{E}_{q_{t}}}
\newcommand{\act}{\ab}
\newcommand{\actp}{a_{\scriptscriptstyle +}}
\newcommand{\actn}{a_{\scriptscriptstyle -}}
\newcommand{\sur}{\xi}
\newcommand{\Id}{\mathbb{I}}
\newcommand{\lp}{G_{t_2}}
\newcommand{\thetap}{\w_{\scriptscriptstyle +}}
\newcommand{\thetan}{\w_{\scriptscriptstyle -}}
\newcommand{\LR}{{LR}}
\newcommand{\tLR}{{$t$-LR}}
\newcommand{\method}{2TLR}
\newcommand{\escort}{\hat{q}_{t_2}}
\newcommand{\zol}{\mathrm{0}\,\mathrm{\text{-}1}}
\newcolumntype{N}{>{\centering\arraybackslash}m{.82in}}
\newcolumntype{M}{>{\centering\arraybackslash}m{0.74in}}
\newcolumntype{G}{>{\centering\arraybackslash}m{0.8in}}
\def\[#1\]{\begin{align}#1\end{align}}
\def\(#1\){\begin{align*}#1\end{align*}}
\definecolor{NAColor}{rgb}{.75,0,.75}
\newcommand{\bprf}{\begin{proof}}
\newcommand{\eprf}{\end{proof}}
\newcommand{\blem}{\begin{lemma}}
\newcommand{\elem}{\end{lemma}}
\DeclareMathOperator{\sign}{sign}
\newtheorem{theorem}{Theorem}
\newtheorem{lemma}[theorem]{Lemma}
\newtheorem{corollary}[theorem]{Corollary}
\theoremstyle{definition}
\newtheorem{definition}[theorem]{Definition}
\newtheorem{remark}[theorem]{Remark}
\begin{document}

\runningtitle{Two-temperature logistic regression based on the Tsallis divergence}
\runningauthor{Ehsan Amid, Manfred K. Warmuth,  Sriram Srinivasan}

\twocolumn[

\aistatstitle{Two-temperature logistic regression\\
based on the Tsallis divergence}
\vspace{-0.4cm}
\aistatsauthor{ Ehsan Amid$^\ast$ \And Manfred K. Warmuth$^{\ast\,,\dagger}$  \And  Sriram Srinivasan$^\ast$  }
\vspace{0.1cm}
\aistatsaddress{ \And  $\ast\,\,$University of California, Santa Cruz\\ $\dagger\,\,$Google Brain, Z\"urich\\\texttt{\{eamid, manfred, ssriniv9\}@ucsc.edu}   \And   }
]

\begin{abstract}
\vspace{-0.31cm}

We develop a variant of multiclass logistic regression that is significantly more robust to noise. The algorithm has one weight vector per class and the surrogate loss is a function of the linear
activations (one per class). The surrogate loss 
of an example with linear activation vector $\ab$ and class $c$
has the form $-\log_{t_1} \exp_{t_2} (a_c - G_{t_2}(\ab))$
where the two temperatures $t_1$ and $t_2$ ``temper'' the
    $\log$ and $\exp$, respectively, and $\lp(\ab)$ is a scalar value that
generalizes the log-partition function.
We motivate this loss using the Tsallis divergence. Our
    method allows transitioning between non-convex and
    convex losses by the choice of the temperature parameters. 
    As the temperature $t_1$ of the logarithm becomes smaller than
the temperature $t_2$ of the exponential, the surrogate loss becomes
``quasi convex''. Various tunings of the temperatures
recover previous methods and tuning the degree of
non-convexity is crucial in the experiments. In particular, quasi-convexity and boundedness of the loss provide significant robustness to the outliers. We explain this by showing that $t_1 < 1$ caps the surrogate loss and $t_2 >1$ makes the predictive
distribution have a heavy tail.

We show that the surrogate loss is Bayes-consistent, even in the non-convex case. Additionally, we provide efficient iterative algorithms for calculating the log-partition value only in a few number of iterations. Our compelling experimental results on large real-world datasets show the advantage of using the two-temperature variant in the noisy as well as the noise free case.

\end{abstract}

\begin{figure*}[t!]
\vspace{-0.5cm}
\begin{center}
\subfloat[]{\includegraphics[height=0.3\textwidth]{./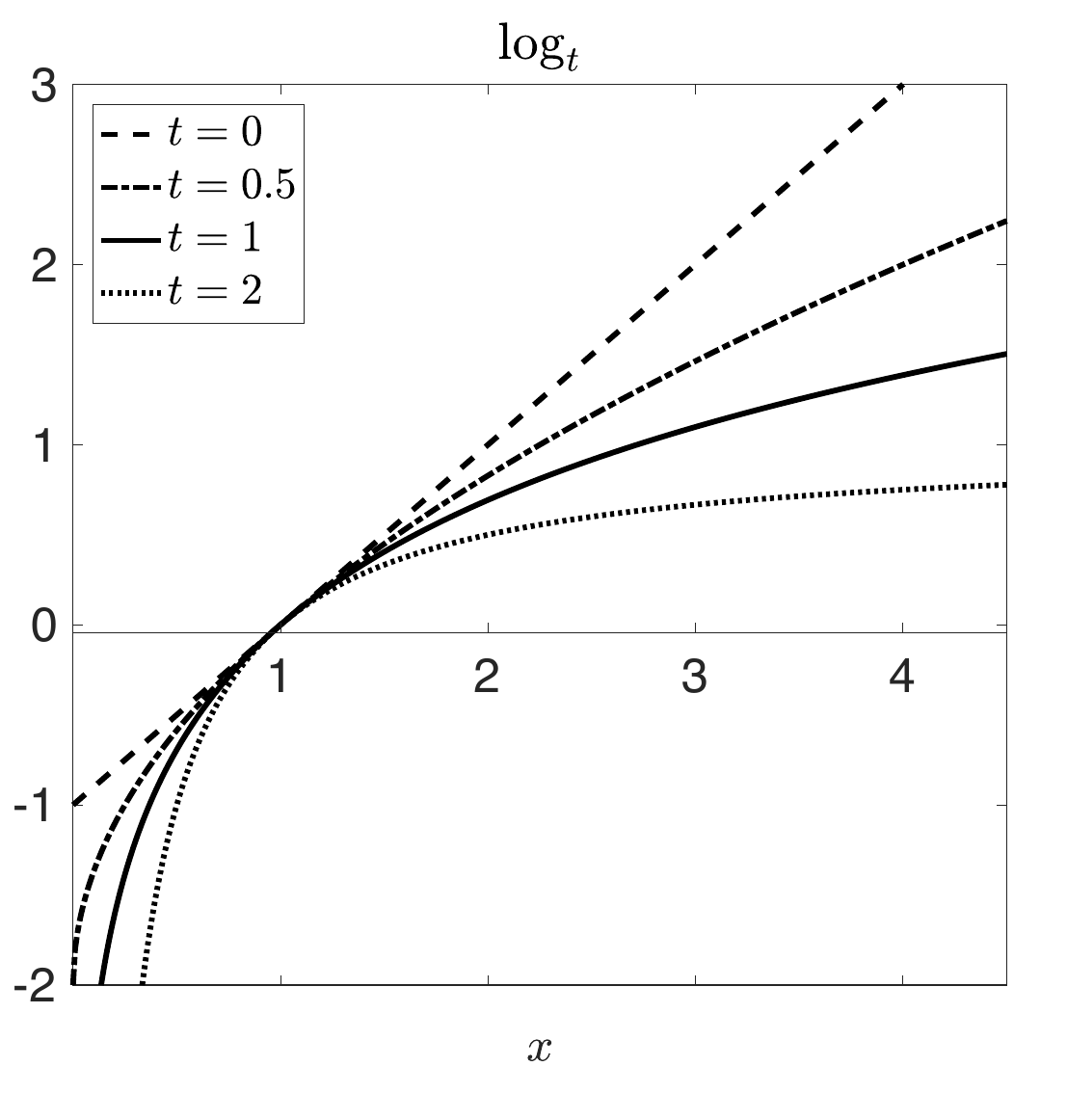}\label{fig:logt}}\,\,
\subfloat[]{\includegraphics[height=0.306\textwidth]{./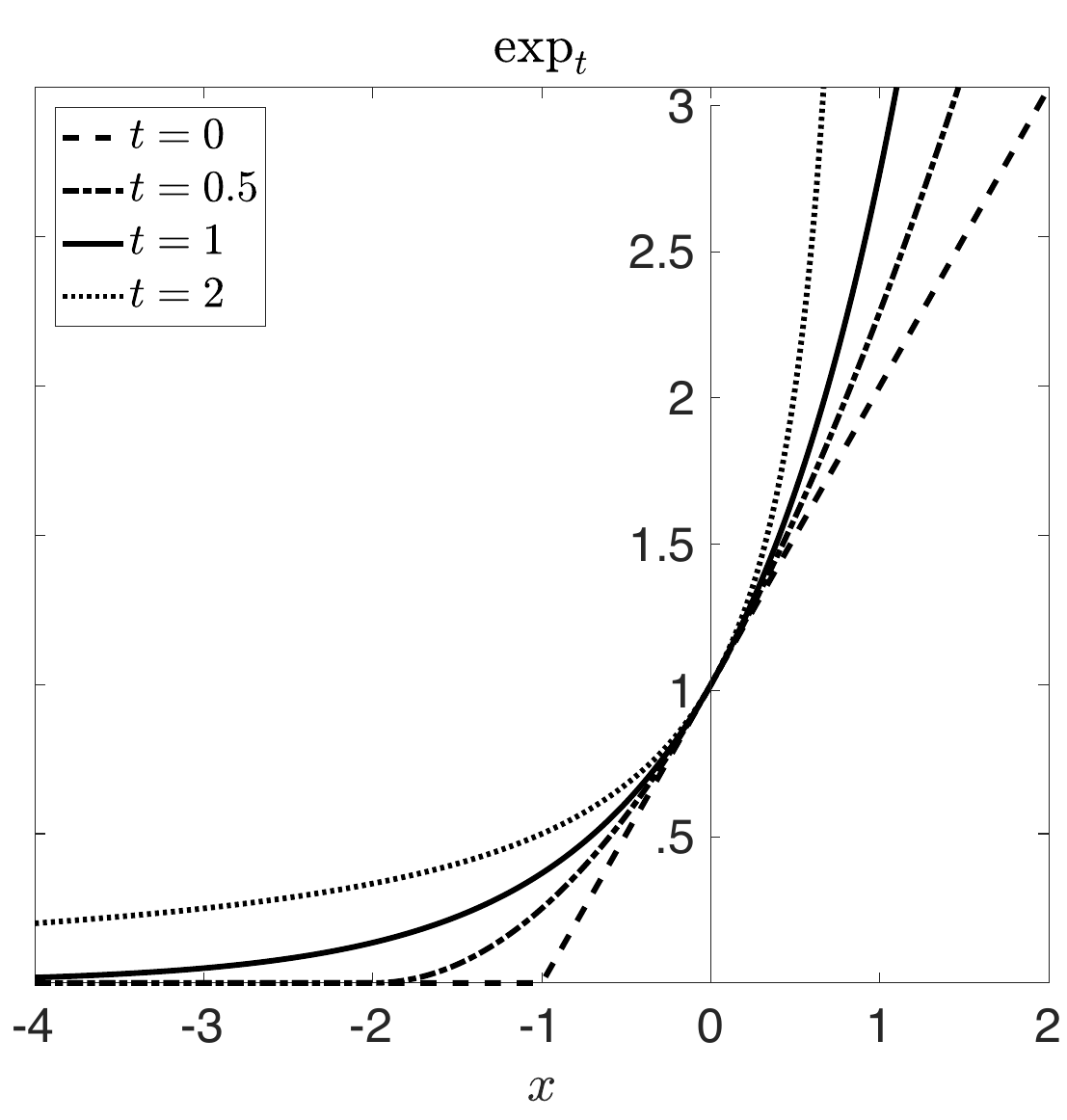}\label{fig:expt}}\,\,\,\,
\subfloat[]{\includegraphics[height=0.3\textwidth]{./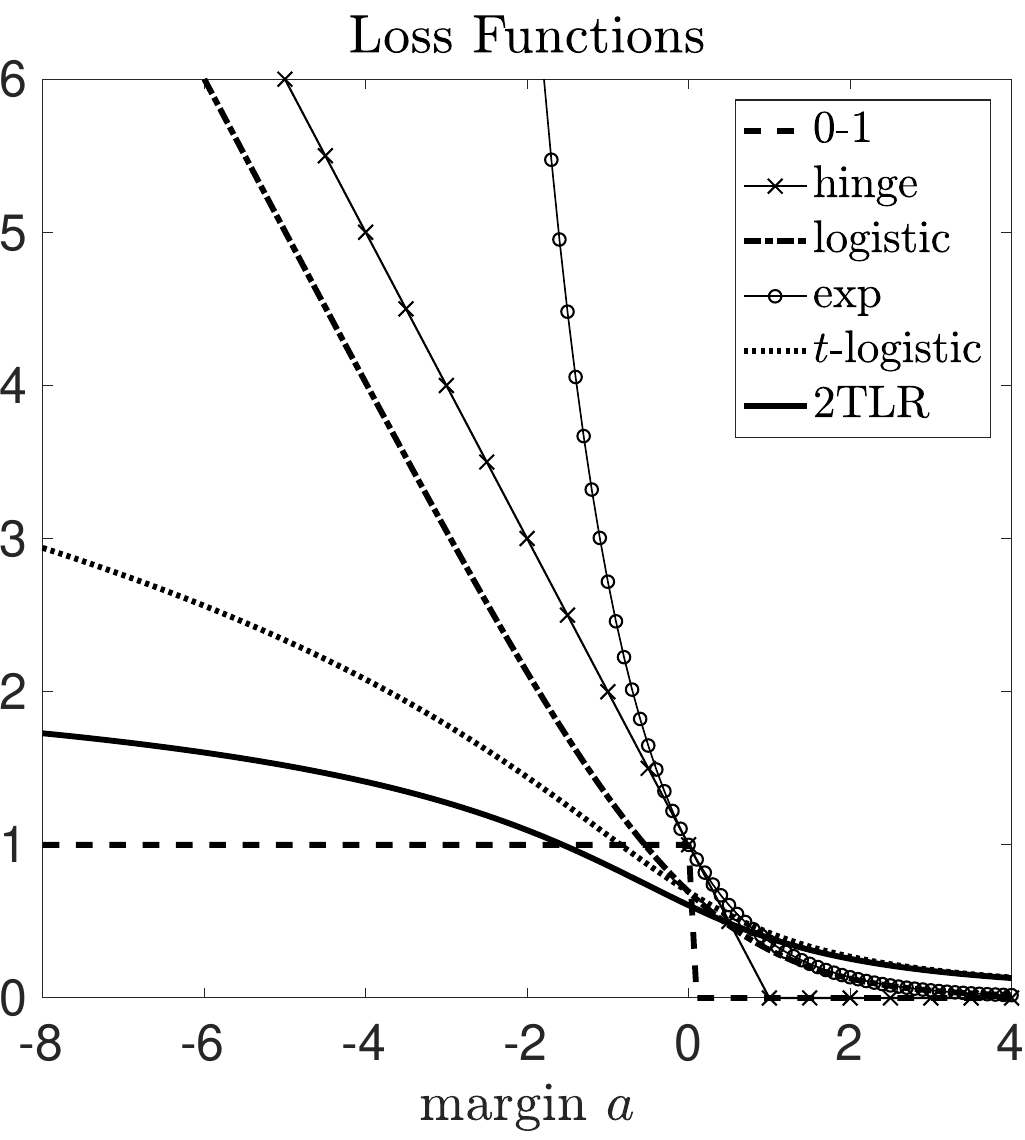}\label{fig:losses}}\hfill
     \caption{Generalized logarithm and exponential
     functions: a) $\log_t$, b) $\exp_t$, and c) Different
     loss functions for classification of a single example
     $\x$ with label $c=+1$ as a function of the margin $a
     = \w^\top \x$. The $t$-logistic loss of~\cite{tlogistic} is non-convex 
     (here $t$-logistic($a$) $=-\log \exp_t\left(a/2 - G_t(a)\right)$ with $t = 1.6$),
     but goes to $+\infty$ as margin $\rightarrow -\infty$. On the other hand, our proposed two-temperature logistic loss 
$-\log_{t_1} \exp_{t_2} (a_c - G_{t_2}(\ab))$ 
(for e.g. $t_1 = 0.6, \, t_2 = 1.6$) is upper-bounded by $1/(1-t_1) = 2.5$.}\label{fig:losses_logt_expt}
\end{center}\vspace{-0.5cm}
\end{figure*}
  
\section{Introduction}

Consider a classification problem where every instance $\x \in \RR^d$ is labeled by one class $c \in \{1,\ldots, C\}$. The goal of learning algorithm is to develop a classifier, parameterized by $\W$, which correctly predicts the class label $c$ of a given instance $\x$. 
In order to learn the optimal parameter $\W^*$ of the classifier, we minimize the \emph{regularized empirical surrogate loss} of a set of i.i.d. examples $\{(\x_n, c_n)\}_{n = 1}^N$ from the data distribution:
    $$
    \W^* = \arg \min_{\W} \; \Loss + \Reg(\W)\, ,
    $$
    where
    $$
    \Loss = \frac{1}{N} \sum_n \sur(\x_n, c_n\,\vert\,\W)\, .
    $$
Here, $\sur(\x_n, c_n\,\vert\,\W)$ denotes the \emph{surrogate
loss}, which replaces the $\zol$ loss associated with
the example $(\x_n, c_n)$. 
Also, $\W$ is a $d\times C$ weight matrix and
$\Reg(\W)$ a regularizer. 
The $c$-th column $\w_c$ is the weight vector for class $c$.
In this paper, we consider the \emph{linear activation} models 
where both the parameterized classifier 
and the surrogate loss $\sur(\x,c\,\vert\,\W)$ 
can be written as functions of the linear \emph{activation} 
vector $\act = \W^\top \x$.

Among different properties of the surrogate functions used
in practice, convexity plays an important role since it
provides the convergence guarantee of the solution to a
global minimum~\cite{urruty}. Additionally, there exist
many convex optimization packages for solving the
minimization problem efficiently~\cite{cvx, julia}. The
main drawback of the convexity is that the loss of an
individual example, e.g., for a highly misclassified
outlier point, can grow indefinitely (at least with a
linear rate) and dominate the objective function.
Therefore, it has been shown that the convex functions are
not robust to noise~\cite{long}. Specifically, Ben-David et
al.~\cite{ben2012} showed that among the convex surrogate
loss functions for linear predictors, the hinge loss has
the lowest expected misclassification error rate and any
strongly convex loss has a qualitatively worse guarantee
when compared to the hinge loss. To alleviate this problem,
several strategies have been proposed to intorduce
non-convexity into the loss function~\cite{mitchell,
robustboost, brownboost, park, croux}. More recently, Ding
et al.~\cite{tlogistic} used heavy-tailed properties of
$t$-exponential distributions to define a robust loss
function for logistic regression. The main idea behind
these techniques is to eventually ``bend down'' the loss
and give up on those points that are highly misclassified.

%

In this paper, we generalize the ideas in~\cite{tlogistic} 
for constructing a non-convex surrogate loss as the
negative log-likelihood of a $t$-exponential
distribution. Our approach is based on the Tsallis divergence which
is the natural choice of divergence for the family of
$t$-exponential distributions~\cite{amari}. 
Our definition of surrogate loss involves a generalized
logarithm and a generalized exponential function. The
generalization imbues each of these functions
with a different temperature parameter. By varying the
temperatures for the two functions, 
we transition between the convex and
more robust quasi-convex loss functions. 
More importantly, the loss function becomes bounded for certain choices of the parameters. 
Figure~\ref{fig:losses_logt_expt} illustrates the different loss functions used for classification along with an example of our proposed 
surrogate loss. 
Even though our generalization of constructing
non-convex surrogate losses is strikingly simple, our experiments
clearly show that the tail-heaviness by itself 
(as introduced in \cite{tlogistic}) is insufficient for
handling the outliers and the label noise.
More importantly, controlling the boundedness of the loss 
is an additional crucial property 
for obtaining robustness to both outliers and  label noise.
A similar bounded surrogate loss was recently developed for training deep neural networks in the presence of label noise~\cite{generalized}. 
Our contributions in this paper can be summarized as follows:
\begin{itemize}
\vspace{-0.3cm}
\setlength{\itemsep}{0.2pt}
\item We generalize the ideas in~\cite{tlogistic} and~\cite{generalized} by
introducing the two-temperature logistic regression (2TRL) 
which lets us control both the tail-heaviness as well as
boundedness of the non-convex surrogate loss.
\item We provide fast efficient iterative algorithms for
calculating the normalization constant in the $t$-exponential
probabilities.
\item We discuss the properties of the surrogate loss for 
different ranges of the two temperatures 
(the previous methods become special cases)
 and the implications of using the Tsallis divergence for parameter estimation. More specifically, we show that \emph{properness} is achieved by switching to the \emph{escort} probability of the optimizer.
\item Finally, we show that our loss is Bayes-consistent, even in the non-convex case. While many convex surrogate
losses enjoy Bayes-consistency, achieving Bayes-consistency
for non-convex losses is a highly non-trivial property and thus,
is an important consideration in designing the loss functions for classification~\cite{design}.
\end{itemize}

\section{Tsallis Entropy and Tsallis Divergence}

The $\log_t$ function with \emph{temperature} parameter $t > 0$ is defined as a generalization of the standard $\log$ function~\cite{texp1, texp2}\footnote{Note that in this section, we use $x$ as a scalar input and it should not be confused with the multivariate random variable $\x$.},
\begin{equation}
\label{eq:logt}
\log_t x = \frac{1}{1-t} (x^{1-t} - 1)\, .
\end{equation}
The $\log_t$ function is monotonically increasing and recovers the standard $\log$ function in the limit $t\rightarrow 1$. However, some properties of the $\log$ function do not generalize to $\log_t$. For instance, $\log_t ab \neq \log_t a + \log_t b$ in general. Additionally, unlike the standard $\log$ function, the $\log_t$ function 
is lower bounded by $-1/(1-t)$
for $0 < t < 1$ and upper bounded by $1/(t-1)$ for $ t > 1$
(See Figure~\ref{fig:logt}). This property has been used to design robust loss transformations for metric learning~\cite{trimap}.

Using the $\log_t$ function, we can generalize the notion of the (Shannon) entropy of a probability distribution. For a probability distribution $p(\x)$, the Tsallis entropy~\cite{tsallis} is defined as
\begin{equation}
\label{eq:tsallis-ent}
H_t(p) = \frac{\int p(\x)^t\, d\x - 1}{1-t} = \int p(\x)
\log_t  \frac{1}{p(\x)} d\x.
\end{equation}
Note that the standard entropy is recovered when $t \rightarrow 1$. Similarly, the Tsallis divergence between the distributions $p(\x)$ and $q(\x)$ can be defined as a generalization of the Kullback-Leibler (KL) divergence, that is,
\begin{equation}
\label{eq:tsallis-div}
D_t(p\Vert q) = -\int p(\x) \log_t \frac{q(\x)}{p(\x)}\,
d\x.
\end{equation}
Note that the KL divergence is also recovered in the limit $t \rightarrow 1$. 
We also define the $\exp_t$ function as the inverse of $\log_t$
(See Figure~\ref{fig:expt}):
\begin{equation} \label{eq:expt}
\exp_t(x) = [1 + (1-t)\,x]_+^{1/(1-t)}\,\, ,
\end{equation}
where $[\,\cdot\,]_+ = \max(\,\cdot\,, 0)$. Again the vanilla $\exp$ function is the $t\rightarrow 1$ limit. 
An important property of the $\exp_t$ function is its heavier tail compared to $\exp$ for values of $ t > 1$ (see Figure~\ref{fig:expt}). This property leads to definition of a class of generalized distributions under the $\exp_t$ function, called the $t$-exponential family of distributions with vector of sufficient statistics $\x$,
\begin{equation}
\label{eq:t-exp-family}
p_t(\x\,\vert\,\btheta) = \exp_t(\inner{\x}{\btheta} -
G_t(\btheta)), \quad  \text{for }t > 0\, .
\end{equation}
Here $\btheta$ is called the \emph {canonical parameter} and the convex function $G_t(\btheta)$, called the \emph{log-partition function},  ensures that the distribution is normalized, that is, 
\begin{equation}
\label{eq:log-part}
\int \exp_t(\inner{\x}{\btheta} - G_t(\btheta))\, d\x= 1\, .
\end{equation}
An important distribution related to the $t$-exponential distribution~\eqref{eq:t-exp-family} is called the \emph{escort distribution} and is defined as
\begin{align}
& q_t(\x\,\vert\, \btheta) = \frac{1}{\Z(\btheta)}\,
\exp_t(\inner{\x}{\btheta} - G_t(\btheta))^t,
\end{align}
where
$$
\Z(\btheta) = \int \exp_t(\inner{\x}{\btheta} - G_t(\btheta))^t\, d\x\,.
$$
Here $\Z(\btheta)$ is the normalization factor. 
It is easy to see that~\cite{amari}
\begin{equation}
\label{eq:grad-G}
    \!\!\!
\nabla G_t(\btheta) = \Eq[\x] = \frac{1}{\Z(\btheta)}\! \int
\!\!\x \exp_t(\x^\top\!\btheta\!-\!G_t(\btheta))^t \,d\x.
\end{equation}
As~\eqref{eq:grad-G} suggest, escort probabilities appear when calculating the gradient of the loss, as we will see in the later sections. When dealing with $t$-exponential distributions, the Tsallis entropy and divergence take the role of Shannon entropy and KL divergence respectively, 
for the vanilla exponential family (See e.g.~\cite{amari}).

\section{Two-temperature Logistic Regression}

Let $\act = \inner{\x}{\W}$. Following the discussion on the heavy-tail properties of the $t$-exponential family of distributions in~\cite{tlogistic}, we model the conditional probability of the class $c$ given input $\x$ with a $t$-exponential distribution with temperature $t_2$:
\begin{align}
\ph(c\,|\,\x, \W) & = \exp_{t_2}(\inner{\x}{\w_c}- \lp(\inner{\x}{\W}))
 \nonumber\\
    & = \exp_{t_2}\left(a_c- \lp(\act)\right)\label{eq:probs-multi}\, ,
\end{align}
where the log-partition function $\lp(\act)$ 
ensures that the probabilities sum up to $1$, that is,
\begin{equation}
\label{eq:log-part-eq}
\sum_c \exp_{t_2}(a_c- \lp(\act)) = 1\, .
\end{equation}
This definition for the conditional probabilities is similar to the ones given in~\cite{tlogistic}. The definition~\eqref{eq:probs-multi} also includes the softmax probabilities as a special case when $t_2 = 1$:
\begin{align}
    \nonumber
    \hat{p}_1 (c\,\vert\,\x, \W) &=  
\exp(a_c- \overbrace{\log\sum_j \exp(a_j)}^{G_1(\act)}) 
    \\&= \frac{\exp(a_c)}{\sum_j \exp(a_j)}\, .
\label{eq:probs-softmae}
\end{align}
In order to adopt the heavy-tail properties of
$t$-exponential distribution, we are mainly interested in
the values of $ t_2 > 1$. However, for values of $t_2 \neq
1$, the log-partition function $\lp(\act)$ does not have a
closed form solution in general and must be calculated
numerically: We provide an iterative method for
computing $G_{t_2}(\act)$ efficiently (Algorithm~\ref{alg:g-multi}).

\begin{algorithm}[t]
\caption{Iterative algorithm for computing $G_t$ for multiclass \method.}
\label{alg:g-multi}
\begin{algorithmic}
\STATE {\bfseries Input:} Vector of activations $\act$, temperature $t > 1$
\STATE {\bfseries Output:} $G_t(\act)$
\STATE $\mu \gets \max(\act)$
\STATE $\tilde{\ab} \gets \act-\mu$
\WHILE{$\tilde{\ab}$ not converged}
\STATE $Z(\tilde{\ab})\gets \sum_{c=1}^C \exp_{t}(\tilde{a}_c)$
\STATE$\tilde{\ab}\gets Z(\tilde{\ab})^{1-t} (\act-\mu)$
\ENDWHILE
\STATE $G_t(\ab)\gets -\log_t(1/Z(\tilde{\ab}))+\mu$
\end{algorithmic}
\end{algorithm}

Given the prediction probabilities~\eqref{eq:probs-multi} 
in the form of a $t_2$-exponential distribution, 
we can now define the loss between the empirical label
distribution $p_e(c\,\vert\, \x_n) = \Id_{c=c_n}$, and the
prediction $\ph(c\,\vert\,\x_n)$ using a 
sum of Tsallis divergences with temperature $t_1$:
\begin{align}
\label{eq:sum-tsallis-multi}
\Loss &= -\frac{1}{N}\!\!\sum_n \!\sum_c
p_e(c\,\vert\,\x_n)\log_{t_1}\!\! \frac{\ph(c\,\vert\,\x_n, \W)}{p_e(c\,\vert\,\x_n)}\nonumber\\
& = -\frac{1}{N} \sum_n \!\sum_c \Id_{c = c_n}\!\log_{t_1}\!\!
    \frac{\ph(c\,\vert\,\x_n, \W)}{\Id_{c = c_n}}.
\end{align}
Justified by a limit argument, $0\times \log_t 0 = 0\times \log_t \infty = 0$, the loss \eqref{eq:sum-tsallis-multi} simplifies to 
\begin{align}
    \nonumber
    &\Loss  = -\frac{1}{N}\sum_n \log_{t_1} \ph(c_n\,\vert\,\x_n, \W)\\
    &= \frac{1}{N}\sum_n 
\underbrace{[-\log_{t_1} \exp_{t_2}(\inner{\x_n}{\w_{c_n}}\!-\!
\lp(\inner{\x_n}{\W}))]}_{\sur_{t_1}^{t_2}(\x_n,c_n\,\vert\, \W)}.
\label{eq:main-loss-multi}
\end{align}
We refer to the classification algorithm with the loss
defined in~\eqref{eq:main-loss-multi} as
\emph{Two-Temperature Logistic Regression} (\method). The gradient of the loss with respect to the $c$-th parameter $\w_c$ can be written as
\begin{align}
    \nonumber
        &\!\! \nabla_{\w_c} \Loss = 
    \\  &\!\!  -\!\!\sum_n \ph(c_n\,\vert\,\x_n,\!\!\W)^{t_2 - t_1}
    \!\Big(\Id_{c = c_n}\!\! -\!
  \escort(c\,\vert\,\x_n,\!\!\W) \Big) \x_n ,
\label{eq:grad-loss-multi} 
\end{align}
where $$\escort(c\,\vert\,\x,\!\!\W) = \frac{\exp_{t_2}(a_c- \lp(\act))^{t_2}}{\sum_j
\exp_{t_2}(a_j- \lp(\act))^{t_2}} \sim \ph(c\,\vert\,\x,\!\!\W)^{t_2}$$
is the escort distribution of $\ph(c\,\vert\,\x,\!\W)$. 

We are mainly interested in $0 < t_1 < 1$ because for this
range, the loss of each individual observation becomes
capped by the constant $1/(1-t_1)$. 
As we show in the experiments, 
the boundedness of loss provides significant improvement in handling noisy observations. 
Note that the gradient of the loss of the $n$-th observation contains an \emph{importance factor} of the form $\ph(c_n\,\vert\,\x_n, \W)^{t_2 - t_1}$
that depends on the conditional probability of the $n$-th observation and the temperature gap $t_2 - t_1$. 
Note that for $t_2 > t_1$, 
the temperature gap is non-negative and the importance
factors dampen the gradient of those observations 
that have small probabilities towards zero. 
Also the loss of each observation is bounded only for values of $0 < t_1 < 1$. 
On the other hand, the importance factors vanish when $t_1 = t_2$. 
In particular, it vanishes for standard logistic regression
(i.e. when $t_1 = t_2 = 1$).

Next we focus on the binary classification 
and analyze the properties of the surrogate loss in this case.

\section{Binary Classification}

For $C = 2$, we use the classes $c \in \{\pm 1\}$
and denote the parameter vector as
$\W = [\thetap, \thetan]$ and linear activations as $\act = [\inner{\x}{\thetap},\, \inner{\x}{\thetan}]^\top = [\actp, \actn]^\top$.
Similar to~\eqref{eq:probs-multi}, we can define the probabilities as
\begin{align}
    \nonumber
\ph(c = \pm1|\x) & = \exp_{t_2}(\inner{\x}{\w_{\pm}}- \lp(\inner{\x}{\W}))\\
& =  \exp_{t_2}(a_{\pm}- \lp(\act))\,.
\label{eq:probs-pos}
\end{align}
The log-partition function $\lp(\act)$ ensures that the two probabilities 
sum to $1$. 
It is easy to see that for any constant $b$, 
$\lp(\act + b\, \mathbf{1}) = \lp(\act) + b\, \mathbf{1}$. 
Therefore we can simplify the margin vector $\act$ by subtracting the mean of the inner-products $\frac{\inner{\x}{\thetap} + \inner{\x}{\thetan}}{2}$, that is,
 $\act = [\frac{\inner{\x}{(\thetap - \thetan)}}{2},\, -\frac{\inner{\x}{(\thetap - \thetan)}}{2}]^\top = [\frac{\inner{\x}{\w}}{2},\, -\frac{\inner{\x}{\w}}{2}]^\top = [\frac{a}{2}, -\frac{a}{2}]^\top$, where we define $\w = \thetap - \thetan$. Thus, we can write the probabilities in the following compact form
\begin{align*}
\ph(c\,\vert\,\x, \w) & = \exp_{t_2}
                            (\frac{c}{2}\,\overbrace{\inner{\x}{\w}}^{a}- \lp(\inner{\x}{\w}))\, 
                        . 
\end{align*}
This definition contains the logistic
probabilities as the special case when $t_2 = 1$:
\begin{align*}
\hat{p}_1 (c\,\vert\,\x) & = \frac{\exp(\frac{c}{2}\,
a)}{\exp(\frac{c}{2}\, a) + \exp_{t_2}(\frac{-c}{2}\, a)} =
\frac{1}{1 + \exp(-c\, a)}\, ,
\end{align*}
since $G_1(a) = \log \left(\exp \frac{a}{2} + \exp
\frac{-a}{2} \right)$. For $t_2 \neq 1$, $\lp(a)$ does not
have a closed form solution\footnote{Except for $t_2=2$.} and we provide a
variant of the iterative algorithm for calculating $G_t(a)$ 
for the binary case (Algorithm~\ref{alg:g-bin}).

Following similar steps as in~\eqref{eq:sum-tsallis-multi}, we can write the loss for the binary case as
\begin{align}
\label{eq:main-loss}
\loss & =  \sum_n 
\underbrace{-\log_{t_1} \exp_{t_2}(\frac{c_n}{2}\,a_n-
\lp(a_n))}_{\sur_{t_1}^{t_2}(\x_n,y_n\,\vert\, \w)} \, .
\end{align}
where $a_n = \inner{\x_n}{\w}$. For $t_1 = t_2 = 1$, the above loss is the
standard logistic regression loss. Also for $t_1=1$ and
$t_2=t>1$, the above becomes the $t$-logistic loss of~\cite{tlogistic}. 
The gradient of the loss~\eqref{eq:main-loss} wrt $\w$ is
\begin{align*}
    &\nabla \loss  = 
    \\&-\frac{1}{2}\sum_n \ph(c_n\,\vert\,\x_n, \w)^{t_2 - t_1}
\Big(c_n - \sum_c c\, \hat{q}_{t_2} (c\,|\x, \w)\Big)\, \x_n\, ,
\end{align*}
where $\hat{q}_{t_2} (c\,\vert\,\x, \w) \sim \ph(c\,\vert\,\x, \w)^{t_2}$ is the escort distribution.

\subsection{Properties}

The curvature of the two-temperature loss function $
\sur_{t_1}^{t_2}(\x,y\,\vert\, \btheta)$ depends on the
choice of the temperature parameters $t_1$ and $t_2$. For
certain choices, we still have convex losses while for the
others, the loss function shows a quasi-convex behavior.
The properties of the loss function are summarized below. 
Without loss of generality, we assume $c = +1$.
\begin{remark}
\label{th:props}
The loss function $\sur_{t_1}^{t_2}(\x, c\,\vert\,\w) = - \log_{t_1}  \exp_{t_2}(\frac{a}{2} - \lp(a))$ has the following properties:
\begin{enumerate}
\item For values of $t_1 \geq t_2$ and $t_1 \geq 1$, the
loss function is convex. Specifically, for $t_1 = t_2 = t \geq
1$, we have the convex loss
\begin{equation}
\sur_{t}^{t}(\x,c\,\vert\, \btheta) = G_t(a) - \frac{a}{2}\, .
\end{equation}
Moreover, the curvature of the function increases with the
temperature gap $t_1 - t_2 > 0$.
\item The function is quasi-convex for $t_1 < t_2$ or for
    any $t_2 \ge 0$ when $t_1 < 1$.
\end{enumerate}
\end{remark}
The proof is provided in the Appendix~\ref{app:props}.

\begin{algorithm}[t]
\caption{Iterative algorithm for computing $G_t$ for binary
\method.}
\label{alg:g-bin}
\begin{algorithmic}
\STATE {\bfseries Input:} Activation $a > 0$, temperature $t > 1$
\STATE {\bfseries Output:} $G_t(a)$
    \IF{$t==2$}\STATE $G_t(a) \gets \sqrt{\sfrac{a^2}{4} + 1}$\\ \RETURN \ENDIF
\STATE $\tilde{a} \gets a$
\WHILE{$\tilde{a}$ not converged}
\STATE $Z(\tilde{a}) \gets 1+\exp_{t}(-\tilde{a})$\;
\STATE $\tilde{a} \gets Z(\tilde{a})^{1-t} a$\;
\ENDWHILE
\STATE $G_t(a)\gets -\log_t(1/Z(\tilde{a}))+ \sfrac{a}{2}$
\end{algorithmic}
\end{algorithm}

\vspace{0.3cm}
\section{Implications of Using the Tsallis Divergence}
\label{sec:tsallis}

We briefly discuss the implicit assumptions behind using the Tsallis divergence for parameter estimation. Consider modeling the (unknown) posterior distribution $\phid(c\,\vert\x)$ for the set of random variables $(\x, c) \in \RR^d \times \{1,\ldots,C\}$ using a discriminative model $\pmodel(c\,\vert\x)$. For this purpose, consider minimizing the expected Tsallis divergence between the class posterior distribution of the data and the predicted posterior probabilities, that is,
\begin{subequations}
\label{eq:tsallis-implication}
\begin{align}
&\EE_{\x}\left[-\sum_{c} \phid(c\,\vert \x)\log_t\frac{\pmodel(c\,\vert \x)}{\phid(c\,\vert\x)}\right]\label{subeq:tsallis-expect2}\\
    & = \EE_{\x}\bigg[\sum_{c} \phid(c\,\vert \x)^t \Big[\log_t \phid(c\,\vert\x) - \log_t \pmodel(c\,\vert \x)\Big]\bigg]\label{subeq:tsallis-int}\\
& = -H_t - \int \sum_{c} \phid(c\,\vert \x)^t\log_t \pmodel(c\,\vert \x)\, \phid(\x)\, d\x\label{subeq:tsallis-int-div}\\
& \approx -H_t - \sum_n \sum_{c} \Id_{c = c_n}\, \log_t \pmodel(c_n\,\vert \x_n)\label{subeq:tsallis-mc}\\ 
&= -H_t - \sum_n \log_t \pmodel(c_n\,\vert \x_n)\label{subeq:tsallis-final}\, ,
\end{align}
\end{subequations}
in which $H_t = -\int \sum_{c} \phid(c\,\vert \x)^t \log_t \phid(c\,\vert\x)\, \phid(\x)\, d\x = \EE_{\x}\left[\sum_{c} \phid(c\,\vert \x) \log_t \frac{1}{\phid(c\,\vert \x)}\right]$ is the expected Tsallis entropy of the posterior distribution $\phid(c\,\vert \x)$ and is a constant. Note that from~\eqref{subeq:tsallis-expect2} to~\eqref{subeq:tsallis-int} we use the property $\log_t(u/v) = u^t (\log_t u - \log_t v)$ of the $\log_t$ function and  from~\eqref{subeq:tsallis-int-div} to~\eqref{subeq:tsallis-mc} we perform a Monte Carlo approximation of the integral and sum using a set of samples $\{\x_n, c_n\}$. Therefore, we can eliminate the second sum in~\eqref{subeq:tsallis-mc} and only keep the terms corresponding to the observed labels, as in~\eqref{subeq:tsallis-final}. However, indeed, minimizing the sum in~\eqref{subeq:tsallis-final} involves the implicit assumption that the $c$ samples are drawn from the tempered conditional distribution $\sim \phid(c\,\vert \x)^t$ and therefore, the minimizer for a single example $\x$ solves $\pmodel^*(c\,\vert \x) \sim \phid(c\,\vert \x)^{1/t}$. Thus, as a consequence of
using the Tsallis divergence
in~\eqref{subeq:tsallis-final}, the surrogate loss
$\sur_{t_1}^{t_2}(\x_n,c_n\,\vert\, \w)$ is not
\emph{proper}~\cite{proper}, i.e., $\ph(c\,|\,\x, \W^*) \neq \phid(c\,\vert \x)$. However, simply enough, the \emph{escort} probabilities $\sim \ph(c\,|\,\x, \W^*)^{t_1}$ match to the correct conditional probabilities. In the case of $t=1$, the Tsallis divergence reduces to the KL-divergence and we recover the maximum-likelihood estimation $- \sum_n \log \pmodel(c_n\vert \x_n) = -\log \prod_n \pmodel(c_n\vert \x_n)$ and $\pmodel^*(c\vert \x) = \phid(c\vert \x)$.

Although the properness of the loss function may be important in density estimation applications, for the classification problem, the estimated posterior probabilities are irrelevant as long as the class label is predicted correctly. Thus, we are mainly interested in the Bayes-consistency property of the loss~\cite{bayes, bayes-multi}, which guarantees that at the solution, the correct label can be predicted using the $\arg \max$ of the margin vector $\act$.

\begin{figure*}[t!]
\vspace{-0.5cm}
\begin{center}
	\subfloat{\includegraphics[width=0.24\textwidth]{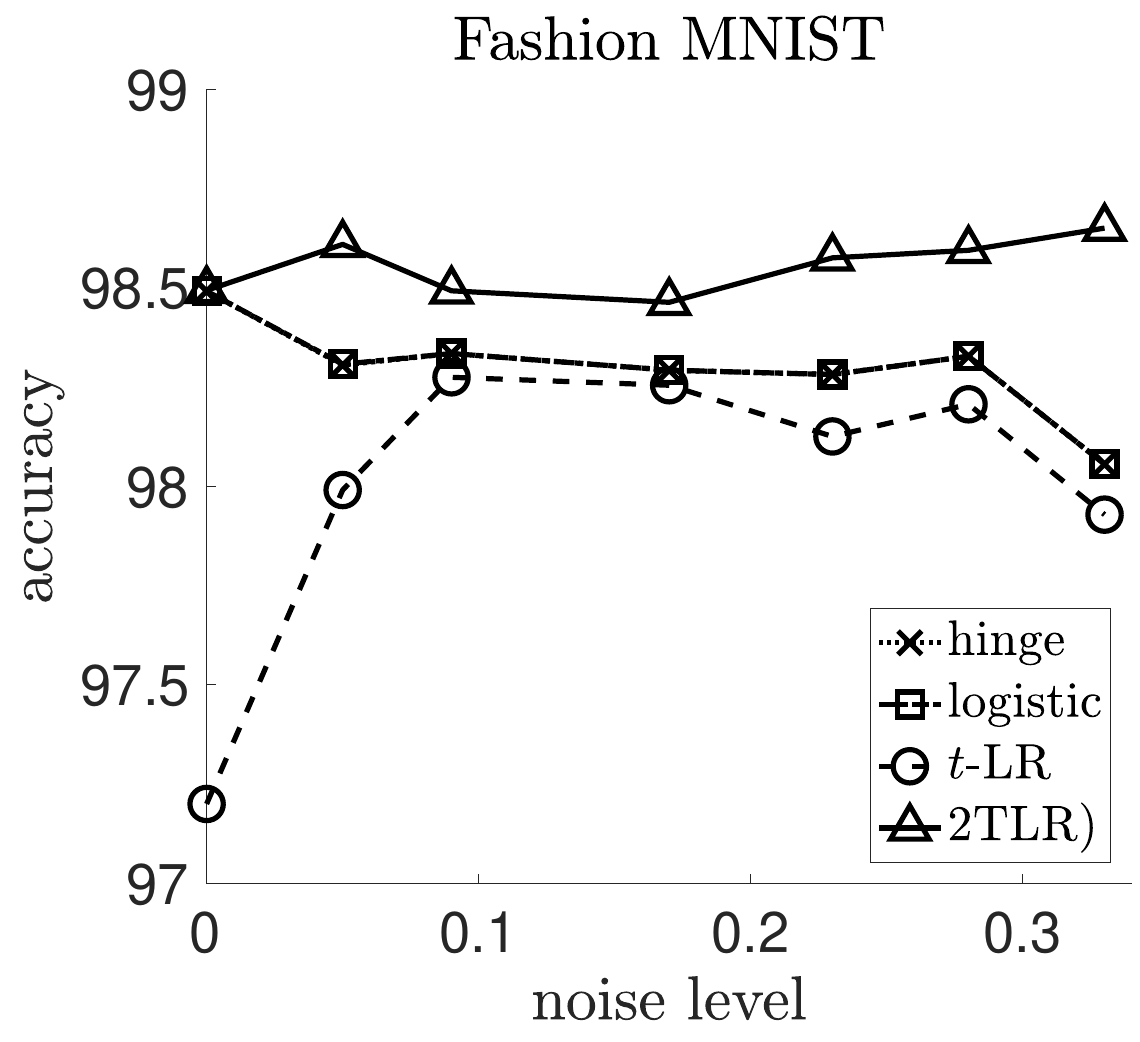}}\,
	\subfloat{\includegraphics[width=0.24\textwidth]{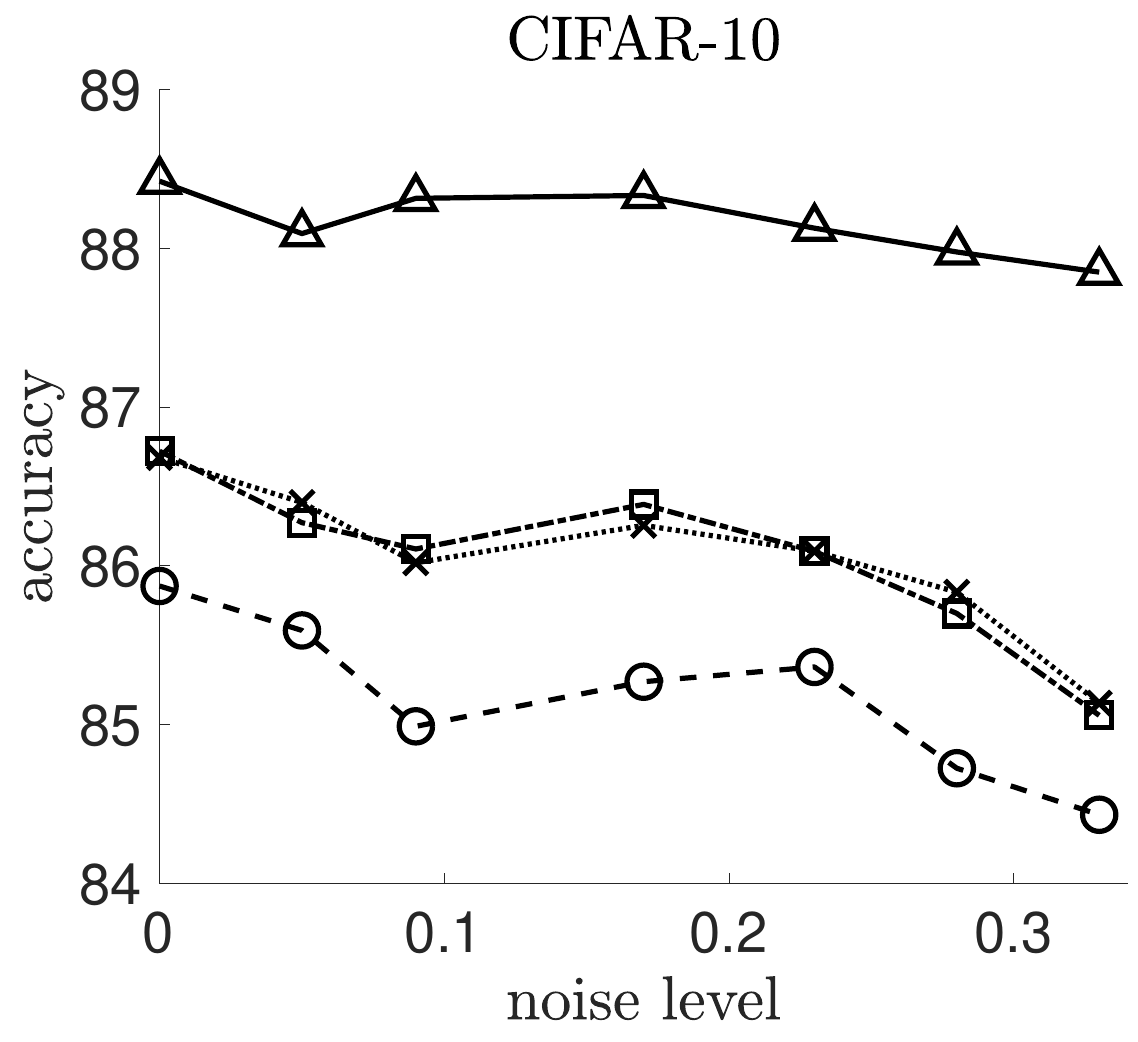}}\,
	\subfloat{\includegraphics[width=0.24\textwidth]{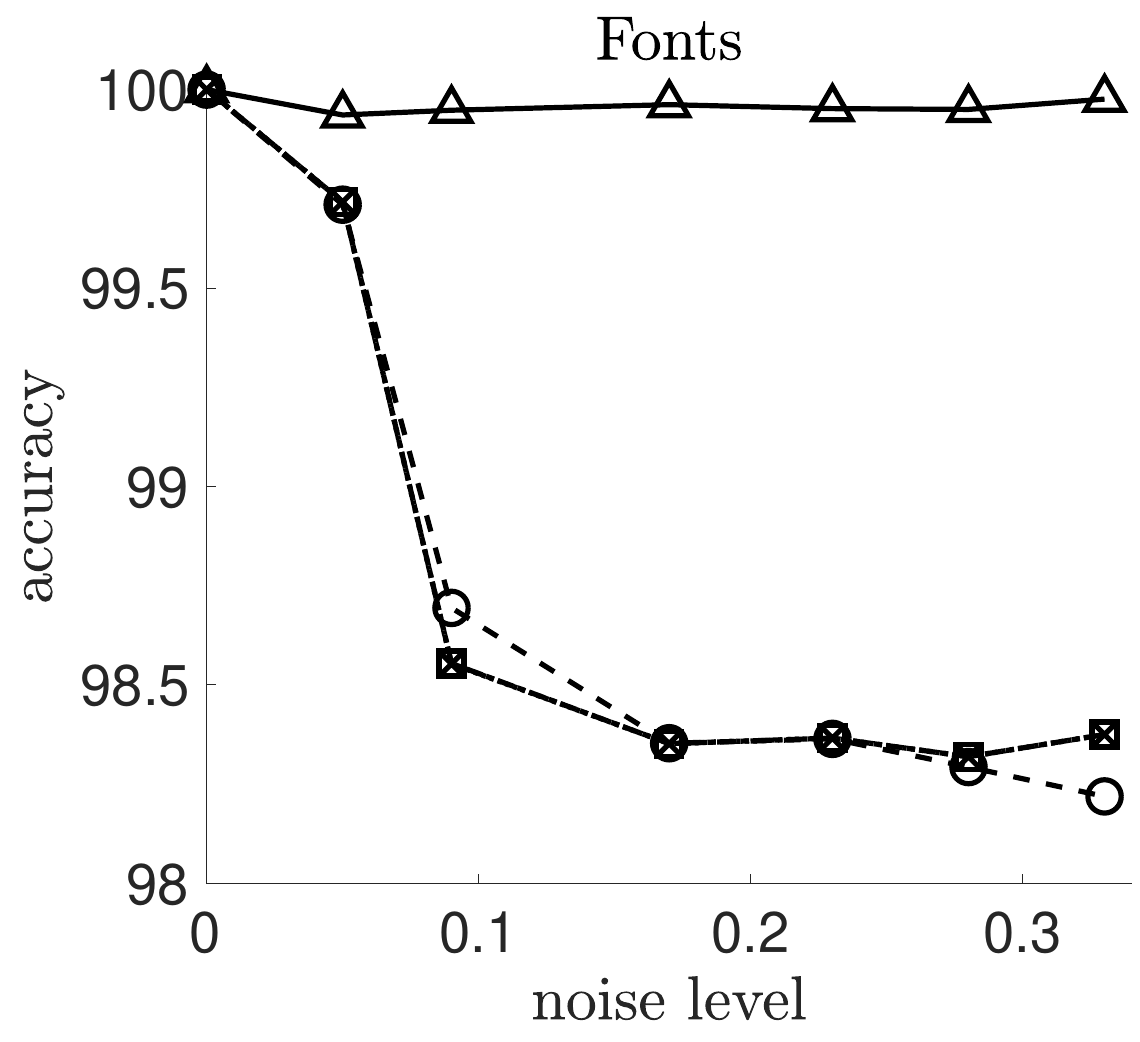}}\,
	\subfloat{\includegraphics[width=0.24\textwidth]{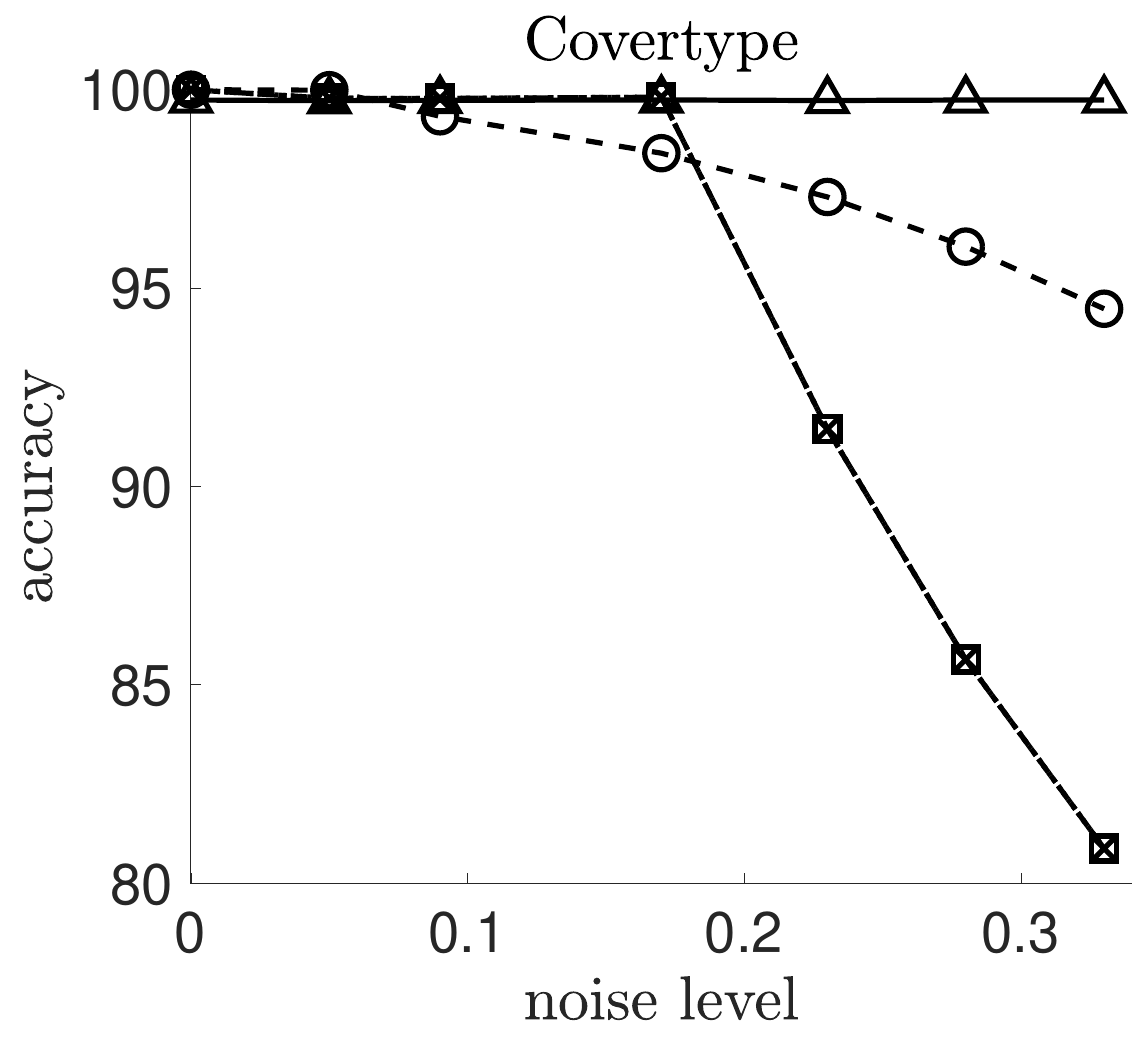}}\hfill
     \caption{The classification accuracy in the presence of instance noise. The errorbars are small and not shown to avoid clutter.}\label{fig:outlier}
\end{center}
\end{figure*}

\begin{table*}[!t]
\noindent
\centering
\begin{tabular}{c G N N N N}
\toprule
\multicolumn{1}{ N }{\begin{tabular}{c} \textbf{Dataset}\\(\#instances, \#dim)\end{tabular}} & \textbf{Noise
Type}  & \multicolumn{4}{c }{\textbf{Classification Accuracy (\%)}} \\  
\cmidrule(lr){3-6}
 \multicolumn{2}{N }{\textbf{}}  & \textbf{hinge} & \textbf{logistic} &
\textbf{$\bm{t}$-LR} & \textbf{\method}\\
\midrule
   \multirow{3}{*}{\begin{tabular}{c} \textbf{Fashion MNIST}\\($20$K, $784$)\end{tabular}} & random & $96.42\pm0.59$ & $96.42\pm0.59$ & $94.09\pm0.48$ & $\mathbf{99.80\pm0.12}$ \\
& small-margin & $98.50\pm0.26$ & $98.50\pm0.26$ & $97.35\pm0.42$ & $\mathbf{99.13\pm0.37}$ \\
    & large-margin & $96.42\pm0.59$ & $96.42\pm0.59$ & $94.09\pm0.48$ & $\mathbf{99.80\pm0.12}$ \\
 \midrule
 \multirow{3}{*}{\begin{tabular}{c} \textbf{CIFAR-10}\\($10.8$K, $\num{1024}$)\end{tabular}} & random & $84.27\pm1.12$ & $84.39\pm1.17$ & $82.11\pm1.01$ & $\mathbf{87.75\pm1.40}$ \\
& small-margin & $84.94\pm0.97$ & $84.94\pm0.99$ & $84.22\pm0.79$ & $\mathbf{86.28\pm1.18}$ \\
 & large-margin & $77.79\pm1.20$ & $77.77\pm1.20$ & $72.58\pm1.44$ & $\mathbf{88.56\pm1.20}$ \\
 \midrule
  \multirow{3}{*}{\begin{tabular}{c} \textbf{Fonts}\\($143$K, $411$)\end{tabular}} & random & $83.78\pm0.28$ & $83.78\pm0.28$ & $\mathbf{84.14\pm0.27}$ & $\mathbf{84.14\pm0.27}$ \\
& small-margin & $\mathbf{83.60\pm0.34}$ & $\mathbf{83.60\pm0.34}$ & $83.38\pm0.36$ & $\mathbf{83.60\pm0.34}$ \\
 & large-margin & $72.39\pm0.32$ & $72.39\pm0.32$ & $\mathbf{72.61\pm0.30}$ & $\mathbf{72.61\pm0.30}$ \\
 \midrule
  \multirow{3}{*}{\begin{tabular}{c} \textbf{Covertype}\\($287$K, $54$)\end{tabular}} & random & $97.52\pm0.88$ & $97.52\pm0.88$ & $\mathbf{99.26\pm0.05}$ & $\mathbf{99.26\pm0.05}$ \\
& small-margin & $96.79\pm0.11$ & $96.79\pm0.11$ & $\mathbf{97.25\pm0.05}$ & $\mathbf{97.25\pm0.05}$ \\
 & large-margin  & $83.59\pm0.24$ & $83.59\pm0.24$ & $84.79\pm0.20$ & $\mathbf{94.03\pm0.13}$ \\
\bottomrule
\end{tabular}
    \caption{Classification accuracy with $10\%$ label noise. The noise is added by selecting the points in three different manners: 1) Random: points are selected uniformly at random, 2) Small-Margin (SM): the points having smallest margin are selected, 3) Large-Margin (LM): the points having largest margin are selected.}\label{tab:label-noise}
\end{table*}

\section{Bayes-consistency}
\label{sec:bayes}


We use the results from Zhang et al.~\cite{coherence} to show the Bayes-consistency of the multiclass class  case. 

\begin{definition}[Zhang et al.~\cite{coherence}]
\label{lem:bayes-consist}
A surrogate loss $\sur(\act, c)$ w.r.t. a margin $\act = [a_1, \ldots, a_m]^\top$ with the additional constraint $\sum_c a_c = 0$ is said to be Bayes-consistent if for all possible label probability distributions $p(c\, \vert\, \x)$ the following conditions are satisfied:
\begin{enumerate}
\item The minimization problem $\act^* = \arg \min_{\act} \sum_c p(c \vert \x)\, \sur(\act, c)$
has a unique solution for all $\x \in \RR^d$, and
\item $\arg \max_c a^*_c = \arg \max_c p(c\, \vert\, \x)$ for all $\x \in \RR^d$.
\end{enumerate}
\end{definition}

We now prove the following. 

\begin{theorem}
\label{th:multi-bayes}
The multiclass surrogate loss $\sur_{t_1}^{t_2}(\x, c\vert\,\W) = -\log_{t_1} \exp_{t_2} (a_c - \lp(\act))$ is Bayes-consistent.
\end{theorem}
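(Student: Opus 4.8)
The plan is to verify the two hypotheses of Lemma~\ref{lem:bayes-consist} for the expected conditional risk at a fixed instance $\x$. Writing $\pi_c = p(c\,\vert\,\x)$ and $\hat p_c = \ph(c\,\vert\,\x,\W) = \exp_{t_2}(a_c - \lp(\act))$, the risk is $F(\act) = \sum_c \pi_c\, \sur_{t_1}^{t_2}(\x,c\vert\,\W) = -\sum_c \pi_c \log_{t_1} \hat p_c$. Using the explicit form of $\log_{t_1}$ this becomes $F = \tfrac{1}{1-t_1}\big(1 - \sum_c \pi_c\, \hat p_c^{\,1-t_1}\big)$, so $F$ depends on $\act$ only through the prediction vector $\hat p$. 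The first fact I would establish is that, under the normalization $\sum_c a_c = 0$, the map $\act \mapsto \hat p$ is a bijection onto the interior of the probability simplex: inverting $\hat p_c = \exp_{t_2}(a_c - \lp(\act))$ gives $a_c = \log_{t_2}\hat p_c - \tfrac1m\sum_j \log_{t_2}\hat p_j$, which is well defined for every interior $\hat p$ and automatically satisfies $\sum_c a_c = 0$. This reparametrization transfers the optimization from the non-convex landscape in $\act$ to a tractable one in $\hat p$.

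For condition~1 (uniqueness) I would show that $F$, regarded as a function of $\hat p$ over the simplex, is strictly convex. The Hessian of $F$ is diagonal, and the prefactor $\tfrac{1}{1-t_1}$ cancels the factor $(1-t_1)$ produced by differentiating the power, leaving entries $t_1\,\pi_c\,\hat p_c^{-1-t_1}$. These are strictly positive whenever $\pi_c > 0$ (recall $t_1 > 0$), so $F$ is strictly convex on the relevant face of the positive orthant, uniformly across the regimes $t_1<1$, $t_1=1$ (the cross-entropy limit), and $t_1>1$. Strict convexity over the convex simplex yields a unique minimizer $\hat p^{*}$, and the bijection above transfers this to a unique minimizing activation $\act^{*}$. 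When some $\pi_c = 0$ the minimizer lies on the boundary and is approached as $a_c \to -\infty$; I would either assume $\pi_c>0$, as is standard, or argue that the $\arg\max$ is preserved in the limit.

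For condition~2 I would solve the stationarity equations explicitly. Differentiating with the gradient identity behind~\eqref{eq:grad-loss-multi}, $\partial F/\partial a_k = -\pi_k \hat p_k^{t_2-t_1} + \escort(k\vert\x,\W) \sum_c \pi_c \hat p_c^{t_2-t_1}$ with $\escort(k\vert\x,\W) = \hat p_k^{t_2}/\sum_j \hat p_j^{t_2}$, and setting this to zero forces $\pi_k \propto \hat p_k^{t_1}$, hence the closed form $\hat p^{*}_c = \pi_c^{1/t_1}/\sum_j \pi_j^{1/t_1}$ (the multiclass analogue of~\eqref{eq:class-calib}). Since $x\mapsto x^{1/t_1}$ is increasing for $t_1>0$ and the normalizer is common to all classes, $\arg\max_c \hat p^{*}_c = \arg\max_c \pi_c$; and since $\log_{t_2}$ (equivalently $\exp_{t_2}$) is monotonically increasing, the relation $a^{*}_c = \log_{t_2}\hat p^{*}_c - \tfrac1m\sum_j\log_{t_2}\hat p^{*}_j$ gives $\arg\max_c a^{*}_c = \arg\max_c \hat p^{*}_c$. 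Chaining the two equalities yields $\arg\max_c a^{*}_c = \arg\max_c p(c\,\vert\,\x)$, which is condition~2, and Lemma~\ref{lem:bayes-consist} then completes the proof.

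The step I expect to be the main obstacle is condition~1: the per-example loss $\sur_{t_1}^{t_2}$ is genuinely non-convex in $\act$ (this is the central point of the paper), so one cannot argue uniqueness in the activation parametrization. The crux is recognizing that the expected risk collapses to a separable power-sum in $\hat p$ whose Hessian is diagonal with positive entries, and that the $t$-exponential link makes $\act \mapsto \hat p$ a bijection; global uniqueness of $\act^{*}$ then follows from strict convexity in the $\hat p$-coordinates even though $F$ is not convex in $\act$. The secondary care point is the boundary behaviour when some class probabilities vanish.
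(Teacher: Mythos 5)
Your proposal is correct and follows essentially the same route as the paper's own proof: both verify the two conditions of Lemma~\ref{lem:bayes-consist}, identify the minimizer through $\exp_{t_2}(a^*_c - \lp(\act^*)) \propto p(c\,\vert\,\x)^{1/t_1}$, obtain uniqueness of $\act^*$ from the zero-sum constraint together with invertibility of $\exp_{t_2}$, and conclude the $\arg\max$ condition from monotonicity. Your explicit derivation of that minimizer --- rewriting the conditional risk as a separable power sum in the predicted probabilities and invoking strict convexity on the simplex --- fills in a step the paper merely asserts, and your remark about the boundary case $p(c\,\vert\,\x)=0$ flags a caveat the paper silently ignores.
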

{\it Proof.}$\;$
The minimizer of the expectation
\begin{equation}
\label{eq:expectation-surrogate}
-\sum_c p(c\, \vert\, \x)\, \log_{t_1} \exp_{t_2} (a_c - \lp(\act))\,
\end{equation}
has the unique solution $\act^*$ such that $\exp_{t_2} (a^*_c - \lp(\act^*)) \propto p(c\, \vert\, \x)^{1/t_1}$. Note that the minimizer is unique because  $\exp_{t_2}$ is an injective function and therefore any other minimizer $\act^{**}$ must satisfy the following: $a^*_c - \lp(\act^*)= a^{**}_c - \lp(\act^{**})$ for all $c \in \{1,\ldots, C\}$. Enforcing the constraint\footnote{Note that we can always enforce the constraint $\sum_c a_c = 0$ by adding and subtracting the constant vector of mean value $\left(\frac{1}{C} \sum_c a_c\right)\mathbf{1}$ without changing the probabilities since $\lp(\act + b\, \mathbf{1}) = \lp(\act) + b\, \mathbf{1}$ for any constant $b$.} $\sum_c a^*_c =  \sum_c a^{**}_c = 0$ yields $\act^* = \act^{**}$. 
Finally, monotonicity of $\exp_{t_2}$ function implies
\begin{multline}
\arg \max_c a^*_c \!=\! \arg\max_c\, \exp_{t_2} (a^*_c -
\lp(\act^*))\nonumber\\
\!=\! \arg \max_c p(c\, \vert \x)^{1/t_1}\! =\!
\arg \max_c p(c\, \vert \x)\, . \;\square
\end{multline} 

The result of Theorem~\ref{th:multi-bayes}, 
i.e. $\ph(\x, c\vert \W^*) \propto p(c \vert \x)^{1/t_1}$, 
is the direct consequence of using the sum of Tsallis divergences between the observed class distributions and the predicted class probabilities, as discussed in the previous section. However, the $\arg\max$ operator is invariant with respect to the positive powers and thus, we still achieve Bayes-consistency.


\begin{corollary} 
\label{th:bin-bayes}
The binary surrogate loss $\sur_{t_1}^{t_2}(\x, c\vert\, \w)$ is Bayes-consistent.
\end{corollary}

Note that because of the form of the margin vector $\act = [a, -a]^\top$ in the binary case, the $\arg\max$ operator is equivalent to $\sign(a)$. Therefore, the given new points can simply be classified using the sign of the activation, without explicitly calculating the probabilities.

\begin{table*}[!t]
\vspace{-0.2cm}
\noindent
\centering
\begin{tabular}{c M M M M}
\toprule
\multicolumn{1}{ N }{\begin{tabular}{c} \textbf{Dataset}\\(\#instances, \#dim)\end{tabular}} & \multicolumn{4}{c }{\textbf{Runtime (s)}} \\  
\cmidrule(lr){2-5}
 \multicolumn{1}{N }{\textbf{}}  & \textbf{hinge} & \textbf{logistic} &
\textbf{$\bm{t}$-LR} & \textbf{\method}\\
\midrule
{\begin{tabular}{c} \textbf{Fashion MNIST}\\($20$K, $784$) \end{tabular}} & $4.40\pm0.28$ & $4.57\pm0.12$ & $7.02\pm0.29$ & $7.35\pm1.21$ \\
\midrule
{\begin{tabular}{c} \textbf{CIFAR-10}\\($10.8$K, $\num{1024}$)\end{tabular}} & $31.90\pm0.22$ & $31.86\pm0.29$ & $35.08\pm0.81$ & $28.34\pm10.56$ \\
\midrule
{\begin{tabular}{c} \textbf{Fonts}\\($143$K, $411$)\end{tabular}} & $49.47\pm4.92$ & $49.75\pm4.99$ & $82.85\pm4.98$ & $58.78\pm6.14$ \\
\midrule
{\begin{tabular}{c} \textbf{Covertype}\\($287$K, $54$)\end{tabular}} & $6.45\pm0.17$ & $6.42\pm0.18$ & $66.66\pm1.20$ & $24.53\pm1.24$ \\
\bottomrule
\end{tabular}
\caption{Runtime of the different algorithms in seconds.}\label{tab:runtime}
\vspace{-0.3cm}
    \end{table*}

\section{Experiments}
\label{sec:exp}

We compare the binary classification accuracy when minimizing
the following losses:
our two-temperature surrogate loss (\method), 
vanilla logistic regression (\LR), 
hinge loss, 
and $t$-logistic regression (\tLR). 
We do not compare our results to the method recently proposed by Feng et al.~\cite{rolr} which is based on detecting and removing the outliers in the dataset. The method in~\cite{rolr} makes strict assumptions about the type of the generative distribution,
the availability of the noise variance and requires an
upper-bound on the number of outliers. These assumptions
make their method impractical for real-world applications.

Our experiments are for the following data sets: 
1) Fashion MNIST~\footnote{Available at: \url{https://github.com/zalandoresearch/fashion-mnist}}, 2) CIFAR-10~\footnote{Available at: \url{https://www.cs.toronto.edu/~kriz/cifar.htmlt}}, 3) Character Font Images~\footnote{\label{fn:uci} From the UCI repository.}, and 4) Covertype\textsuperscript{\ref{fn:uci}}.
For each dataset, we randomly pick two classes such that the number instances from each class are roughly the same. The size and number of dimensions of
each dataset is shown in the first column of
Table~\ref{tab:label-noise}.

For each dataset, we randomly
consider $10$\% of the instances for test and perform
10-fold cross validation on the remaining part to find the optimal set of parameters for each method. These parameters include the $L_2$-regularizer values for all methods and temperature values for $t$-LR and $2$TLR. The regularizer values are selected from the range $[10^{-5}, 10^{-1}]$. The range of temperature values for $t$-LR is chosen to be $[1.12,1.9]$ and the range for $t_1$ and $t_2$ temperatures are set to $[0.1, 1]$ and $[1.12,1.9]$, respectively. The values of all parameters
are chosen using cross-validation. More specifically, the value of temperature for $t$-LR is set to $1.12$ for the CIFAR-10, Covertype, and Fashion MNIST, and $1.3$ for the Fonts dataset. For $2$TLR, we set $t_2$ to be the same as in $t$-LR and for $t_1$, we use $0.1$ for CIFAR-10, Covertype and Fashion MNIST, and $0.9$ for the Fonts dataset. The results are averaged over $10$ random train-test splits. We perform experiments in the presence of instance and label noise. All experiments are done on a 24 core cluster with 128 GB of RAM. We use a parallel implementation which utilizes all the cores in a machine.

We use the L-BFGS method for minimizing the losses. The initial weights are set to values sampled from a zero-mean Gaussian distribution with std $= 0.001$. In general, $t$-LR and our $2$TLR method are non-convex and converge to a local minimum. However, the results are consistent over multiple random initializations. This can be verified by the std of the accuracy results in Table~\ref{tab:label-noise}. Note that we observed the method to converge to bad local minima for std $> 0.01$.

\subsection{Instance Noise}

For the instance noise experiments, we consider the case
where a subset of the training instances, chosen uniformly
at random are replaced by instances from the remaining set
of classes (i.e. those classes other than the two selected
classes for the binary classification). This resembles the
case of a multiclass dataset where a subset of the
instances from each class are mislabled as instances of
other classes. Therefore these mislabled instances often
become extreme outliers for the class they are wrongly
labeled with.

Figure~\ref{fig:outlier} shows the results in
the presence of different amounts of this type of instance noise. 
The new $2$TLR method is significantly more robust to this noise 
than all the other methods and its performance is not considerably 
affected by up to $33$\% noise. The main reason for robustness of our method is the fact that by capping the surrogate loss, the total loss of the method is not affected much by the loss of each individual instance. 
This also validates our claim that tail-heaviness of the
distribution by itself (as used in $t$-LR) cannot handle
the outliers as well: In some case $t$-LR provides even worse results than LR
and all are beaten by $2$TLR.

\subsection{Label Noise}

We consider the label noise experiments where the labels of a subset of the training instances is flipped. Note that unlike the instance noise which alters the input distribution $p(\x)$, the labels noise targets the distribution of the labels $p(c\,\vert\, \x)$. Therefore, the label noise  is generally handled by first approximating the label inversion rates and then, correcting the data distribution by reweighting the loss of individual instances or considering a label-dependent surrogate loss~\cite{reweighting,labelnoise}. Nevertheless, the noise can be alleviated to some extent by the tail-heaviness of the modeling distribution~\cite{tlogistic}. In addition to tail-heaviness, we show that in some cases, tuning the level of non-convexity and bounding the loss function also improves the performance. 

We consider the ``random'' label noise where the label of a uniformly sampled subset of points is flipped. The subset of the noisy instances can also be selected  by an adversarial mechanism that targets the training instances based on a certain notion of ``importance''. We also consider ``small-margin'' and ``large-margin'' label noise in which we first train a LR classifier on the noise free data and calculate the margin $c \cdot (\w^\top \x)$ of each datapoint. Next, we  select the desired portion of the correctly classified datapoints that receptively have the smallest and largest margins. Therefore, these two noise mechanisms target different type of instances, i.e. those closer to the decision boundary and those that are far away. Table~\ref{tab:label-noise} shows the results under $10$\% noise. $2$TLR consistently has superior performance in all cases on all datasets. In some cases, the optimal value of the temperatures coincides with the values for LR ($t_1 = t_2 = 1$) and $t$-LR ($t_1 = 1,\, t_2 > 1$). However, in most cases, the optimal performance is achieved when $0 < t_1 < 1$.

\subsection{Runtime}
Table~\ref{tab:runtime} shows the runtime of the optimization step of the methods.
In general, the runtime of the $2$TLR is comparable to the other methods, and in some cases the convergence time is faster than the vanilla logistic regression (LR). However, in some cases (e.g. Covertype dataset), $2$TLR takes considerably longer time to converge.
In particular, 
the overhead from calculating the $G_t$ values is negligible;\footnote{This was validated empirically by comparing to the \texttt{fzero} function in MATLAB, but the results are omitted.}; the iterative algorithm takes around 20 iterations to converge to an accuracy of $10^{-10}$.
%

\section{Conclusions}

We developed a generalized loss function for logistic regression which provides two temperatures to tune the properties of the loss. 
The first temperature tunes the level of non-convexity and the boundedness of the loss while the second one controls the tail-heaviness of the probabilities.
Our experiments indicate that tuning the level of the non-convexity 
and boundedness is a crucial property for obtaining
robustness to both instance and label noise while the
computation time is comparable to logistic regression. 
%

\section*{Acknowledgement}
The authors would like to thank Nan Ding for his help with the iterative algorithms for calculating the normalization constants.
\bibliographystyle{plain}
\bibliography{refs} 

\clearpage
\appendix
\numberwithin{equation}{section}

\section{Verification of Iterative Algorithms for Computing $G_t$}
\label{sec:gt_converge}
In this section, we verify that the iterative algorithm for computing $G_t$ is going to converge in the binary case. The proof for the multiclass case follows immediately as a simple extension. We only need to verify that $\tilde{a}_{(k)}$ converges to the corresponding $\tilde{a}$ of $a$ such that the value of $G_t$ normalizes the sum.

First of all, given $a$, since $t>1$ and $Z(\tilde{a})>1$, it is clear that $0<\tilde{a}<a$. On the domain of $0<u<a$, it is easy to verify that $Z(u)^{1-t} a- u$ is a monotonically decreasing function and it crosses at 0 only at $\tilde{a}$. Therefore, when $\tilde{a}_{(k)}>\tilde{a}$, $\tilde{a}_{(k+1)}<\tilde{a}_{(k)}$; when $\tilde{a}_{(k)}<\tilde{a}$, $\tilde{a}_{(k+1)}>\tilde{a}_{(k)}$. 

We then prove that $\tilde{a}_{(k)}$ is a monotonically decreasing sequence. We prove this by mathematical induction. 
Since $\tilde{a}_{(0)}=\ah$, $\tilde{a}_{(1)}<a=\tilde{a}_{(0)}$. 
Next assume that in the $k$-th iteration, $\tilde{a}_{(k)}<\tilde{a}_{(k-1)}$. Since $Z(\tilde{a}_{(k)})>Z(\tilde{a}_{(k-1)})$, we have $\tilde{a}_{(k+1)}<\tilde{a}_{(k)}$. 
Therefore, it follows that $\tilde{a}_{(k)}$ is monotonically decreasing and it is lower bounded by $\tilde{a}$. Furthermore, $\lim_{k \to +\infty} \tilde{a}_{(k)}$ exists. 

Finally, 
\begin{align}
\lim_{k \to +\infty} \tilde{a}_{(k)} &= \lim_{k \to +\infty} \tilde{a}_{(k+1)} \nonumber\\
 &= \lim_{k \to +\infty} Z(\tilde{a}_{(k)})^{1-t} a \nonumber\\
 &= Z(\lim_{k \to +\infty} \tilde{a}_{(k)})^{1-t} a, \label{eq:gt_converge_verify}
\end{align}
where \eqref{eq:gt_converge_verify} holds because $Z(u)^{1-t}$ is continuous in $u$. Therefore, it follows that $\lim_{k \to +\infty} \tilde{a}_{(k)} = \tilde{a}$. 

For the binary case when $t = 2$, note that
$$
\exp_t(x) = (1 - x)^{-1} \,\,\text{ and }\,\, \log_t(x) = 1 - x^{-1}\, .
$$
The value $G_t(a)$ needs to satisfy
\begin{align*}
    1 & = \exp_t(\frac{a}{2} - G_t(a)) + \exp_t(-\frac{a}{2} - G_t(a))\\
    & = \frac{1}{1 + \sfrac{a}{2} + G_t(a)} + \frac{1}{1 - \sfrac{a}{2} + G_t(a)}\\
    & = \frac{2\, (1 + G_t(a))}{(1 + G_t(a))^2 - \sfrac{a^2}{4}}\, ,
    \intertext{which yields}
    & (1 + G_t(a))^2 - \frac{a^2}{4} = 2\,(1 + G_t(a))\, .
\end{align*}
 By cancelling the terms from both sides, we have
 $$
 G_t(a)^2 = \frac{a^2}{4} + 1\, .
 $$
 Since $G_t(a) \geq 0$, we have $ G_t(a) = \sqrt{\sfrac{a^2}{4} + 1}$.
\section{Proof of Remark~\ref{th:props}}
\label{app:props}

For the surrogate loss $$\sur_{t_1}^{t_2}(a) = -\log_{t_1}
\exp_{t_2}(a/2 - \lp(a)),$$ we have
\begin{align}
&\frac{\partial \sur_{t_1}^{t_2}(a)}{\partial a} =\, - \ph(a)^{t_2 - t_1}\, \left(\frac{1}{2} - \partial\lp(a)\right)\, ,\nonumber\\
&\frac{\partial^2 \sur_{t_1}^{t_2}(a)}{\partial a^2}  =\, \ph(a)^{t_2 - t_1}\, \times \label{eq:loss-second}\\
&\left[\partial^2 \lp(a)- (t_2 - t_1)\,
\ph(a)^{t_2- 1}\left(\frac{1}{2} - \lp(a)\right)^2\right]\, , \nonumber
\end{align}
where we define $\ph(a) \coldef \exp_{t_2}(a/2 - \lp(a))$ and $\partial\lp(a)$ and $\partial^2 \lp(a)$ are given as follows. 
\begin{align}
\partial \lp(a) & \!=\!  \frac{1}{2}\!\frac{\sum_c c\,
\exp_{t_2}(\frac{c}{2}a\!-\!\lp(a))^{t_2}}{\sum_c
\exp_{t_2}(\frac{c}{2}a\!-\!\lp(a))^{t_2}}\, ,
\label{eq:log-part-first}
\end{align}
\begin{equation}
\partial^2 \lp(a) =
 \frac{t_2\! \sum_c \!\exp_{t_2}(\frac{c}{2} a \!-\! \lp(a))^{2t_2 -1}
\left[\frac{c}{2}\!-\! \partial \lp(a)\right]^2 }{\sum_c
\exp_{t_2}(\frac{c}{2} a \!-\! \lp(a))^{t_2}}\label{eq:log-part-sec}
.\!\!\!\!
\end{equation}

For $t_2 = t_1 \geq 1$, we have
\begin{equation*}
\frac{\partial^2 \sur_{t_1}^{t_2}(a)}{\partial a^2}  = \partial^2 \lp(a) \geq 0\, ,
\end{equation*}
which can be verified from~\eqref{eq:log-part-sec}. Moreover, for $t_1 \geq 1$ and $t_1 \geq t_2$, we have
\begin{align}
\frac{\partial^2 \sur_{t_1}^{t_2}(a)}{\partial a^2} &= \frac{1}{\ph(a)^{t_1 - t_2}}\, \times\nonumber\\
& \left[\partial^2 \lp(a)+ (t_1 - t_2)\, \ph(a)^{t_2- 1}\left(\frac{1}{2} - \lp(a)\right)^2\right]\nonumber\\
 &\geq \partial^2 \lp(a)+ (t_1 - t_2)\, \ph(a)^{t_2- 1}\left(\frac{1}{2} - \lp(a)\right)^2\nonumber\\
 & \geq \partial^2 \lp(a) \geq 0\, . \label{eq:more-convex}
\end{align}
Thus, the loss is convex, similar to the latter case.

Now, consider the case $t_2 \geq t_1$. Suppose $\ph(-a) = (1 - \ph(a)) = \lambda\, \ph(a)$ for some $\lambda \geq 0$. Substituting for $\ph(-a)$ in~\eqref{eq:log-part-first} and~\eqref{eq:log-part-sec}, we can write~\eqref{eq:loss-second} as
\begin{multline}
\frac{\partial^2 \sur_{t_1}^{t_2}(a)}{\partial a^2} = \ph(a)^{t_2 - 1}\, \frac{1}{(1 + \lambda^{t_2})^2}\,\\
\times \left[t_2 \left(\frac{1 + \frac{1}{\lambda}}{1 +
\lambda^{t_2}}\right) - (t_2 - t_1)\right]\, .\nonumber
\end{multline}
For sufficiently small (respectively, large) value of $\lambda$, we have $\frac{\partial^2 \sur_{t_1}^{t_2}(a)}{\partial a^2} > 0$ (respectively, $\frac{\partial^2 \sur_{t_1}^{t_2}(a)}{\partial a^2} < 0$). The inflection point happens when $t_2 (1 + \frac{1}{\lambda}) = (t_2 - t_1)(1 + \lambda^{t_2})$, i.e.~$\frac{\partial^2 \sur_{t_1}^{t_2}(a)}{\partial a^2} = 0$.

Finally, we show the case $t_1 < 1$. We only need to consider the case $t_2 \leq t_1< 1$. Note that for the binary case,
\begin{equation}
\label{eq:sum-prob-one}
\exp_{t_2}(a/2 - \lp(a)) + \exp_{t_2}(-a/2 - \lp(a)) = 1\, .
\end{equation}
Using the definition of $\exp_{t_2}$, we can write~\eqref{eq:sum-prob-one} as
\begin{multline}
\label{eq:sum-exp-one}
[1 + (1-t_2)\,(a/2 - \lp(a))]_+^{1/(1-{t_2})}\\
 + [1 + (1-t_2)\,(-a/2 - \lp(a))]_+^{1/(1-{t_2})} = 1\, .
\end{multline}
For $a = 0$,~\eqref{eq:sum-exp-one} yields
\begin{equation*}
[1 + (1-t_2)\,(-\lp(0))]_+^{1/(1-{t_2})} = \frac{1}{2}\, .
\end{equation*}
From $t_2 < 1$, we have $(1-t_2) > 0$ and therefore, $\lp(0) > 0$. From convexity and symmetry ($\lp(a) = \lp(-a)$) conditions, we conclude $\lp(a) \geq \lp(0) \geq 0, \, \forall a$. Consequently, for values of $a \leq -\frac{1}{(1-t_2)}$, $\lp(a) = -\frac{a}{2}$ satisfies~\eqref{eq:sum-prob-one}. This implies that for  $a \leq -\frac{1}{(1-t_2)}$, we have $\ph(a) = 0$ and thus, $\sur_{t_1}^{t_2}(a) = -\log_{t_1}(0) = -\frac{1}{1-t_1}$ is a constant. From~\eqref{eq:more-convex}, we conclude that the loss is convex for $a > -\frac{1}{(1-t_2)}$ and is a constant for $a \leq -\frac{1}{(1-t_2)}$ Thus, it is quasi-convex.

\end{document}